\newcommand{\objset}{\mathcal{X}}
\newcommand{\targetvalset}{\mathcal{Y}}
\newcommand{\spacefull}{\mathbb{X}}
\newcommand{\probl}{\rho}
\newcommand{\probx}{\rho_{\objset}}
\newcommand{\probxc}{\rho_{\spacefull}}
\newcommand{\probyx}{\rho_{\targetvalset|\spacefull}}
\newcommand{\SizeEdge}{n}
\newcommand{\expectwithbrackets}[2]{\mathbb{E}_{#1}{\left[#2\right]}}
\newcommand{\condexpectwithbrackets}[3]{\mathbb{E}_{#1}{\left[#2\middle| #3\right]}}
\newcommand\xqed[1]{%
  \leavevmode\unskip\penalty9999 \hbox{}\nobreak\hfill
  \quad\hbox{#1}}
\newcommand\eoe{\xqed{$\blacklozenge$}}
\newcommand\eof{\xqed{$\blacksquare$}}
\begin{document}

\title{Learning from Networked Examples\thanks{A main author of this paper is a student.}}


\author{\name{Yuyi Wang} 
\\
\addr ETH Z\"{u}rich, Switzerland
\AND 
\name{Zheng-Chu Guo} 
\\
\addr Zhejiang University, P.R.\ China
\AND
\name{Jan Ramon} 
\\
\addr Inria, France \& KULeuven, Belgium}

\date{Received: date / Accepted: date}

\editor{The correct dates will be entered by the editor}

\maketitle

\begin{abstract}
Many machine learning algorithms are based on the assumption that training examples are drawn independently. 
However, this assumption does not hold anymore when learning from a networked sample because two or more training examples may share some common objects, 
and hence share the features of these shared objects. 
We show that the classic approach of ignoring this problem potentially can have a harmful effect on the accuracy of statistics, and then consider alternatives. 
One of these is to only use independent examples, discarding other information. 
However, this is clearly suboptimal. 
We analyze sample error bounds in this networked setting, providing significantly improved results. 
An important component of our approach is formed by efficient sample weighting schemes, which leads to novel concentration inequalities. 
\end{abstract}

\begin{keywords}
Networked Examples, Non-i.i.d.\ Sample, Concentration Inequality, Sample Error Bound, Generalization Error Bound
\end{keywords}

\section{Introduction}\label{sec:intro}
Recently, there has been an increasing interest in network-structured data, 
such as data in social networks \citep{leinhardt2013social}, economic networks \citep{knoke2014economic}, citation networks \citep{nykl2014pagerank} and chemical interaction networks \citep{szklarczyk2014string}. 
An important challenge is that data concerning related objects cannot be assumed to be independent. 
More precisely, a key assumption made by many approaches in the field of statistics and machine learning is that observations are drawn independently and identically (i.i.d.)\ from a fixed distribution. 
However, this assumption does not hold anymore for observations extracted from a network. 

Many practical approaches to supervised learning in networks ignore (at least partially) the problem and learn models with classic machine learning techniques. 
While these work to some extent, they are not supported by a well-developed theory such as the one which provides generalization guarantees for the i.i.d.\ case 
as a function of the number of training examples. 
As a consequence, one may miss opportunities to learn due to the numerous dependencies between the training examples. 

In this paper, we make a step towards addressing this problem. 
More specifically, the contributions of this paper are as follows:
\begin{itemize}
\item We introduce a framework for explicitly representing dependencies among examples. Such framework allows for encoding domain knowledge and can form a basis for studies preparing a machine learning effort on networked data. 
\item We introduce a relaxed i.i.d.\ assumption for networked examples, which generalizes over classic i.i.d.\ assumptions and covers a number of existing learning problems. 
\item We show new concentration inequalities for networked examples and demonstrate the applicability of our results to learning theory by upgrading an empirical risk minimization result to networked data. In the process, we improve on earlier concentration inequalities by \cite{janson04} and then improve on earlier learning guarantees, even if we use the original unweighted ERM. 
\item We propose a novel learning scheme based on appropriate weighting on networked examples, which is efficiently implementable by solving a linear program and satisfies monotonicity which ensures that one can always do better when the networked dataset becomes larger (in the sense of super-hypergraphs). 
\end{itemize}

\subsection{Related work}\label{sec:relatedwork}
\cite{usunier06} define interdependent training data which is closely related to networked examples, make similar assumptions, 
and show generalization error bounds for classifiers trained with this type of data. 
In \citep{Liva10—JMLR}, PAC-Bayes bounds for classification with non-i.i.d data are established based on fractional colorings of graphs, results also hold for specific learning settings such as ranking and learning from stationary $\beta$- mixing distributions. 
\citep{Liva15—ICML} establishes new concentration inequalities based on the idea of fractional chromatic numbers and entropy methods for fractionally sub-additive and fractionally self-bounding functions of dependent variables. All these results are based on Janson's work \citep{janson04} and use dependency graphs \citep{erdos1975problems} to represent the examples and their relations. 
In this paper, we use a different representation for networked examples, improve the concentration inequalities in \citep{janson04} and then potentially improve on these existing learning results. Besides, we propose a better learning scheme than that in these existing works. 

\cite{Wang2011-ecml:proc} considered a similar setting of networked examples. 
Their work assumes a bounded covariance between pairs of examples connected with
an edge (excluding possible higher-order interactions) but this is not assumed in our model. While we use our model to show learning guarantees, 
\cite{Wang2011-ecml:proc} shows corrections for the bias (induced by the dependencies between examples) on statistical hypothesis tests. 

Mixing conditions are also used to model non-i.i.d.\ samples.
For example, in \citep{sun10,guo11}, the learning performance of regularized classification and regression algorithm using a non-i.i.d.\ sample is investigated, 
with the assumption that the training sample satisfies mixing conditions.
\cite{modha96} presented a Bernstein-type inequality for stationary exponentially $\alpha$-mixing processes based on the effective number (less than the sample size).
Our Bernstein-type inequalities for networked random variables assigns weights to examples. 
However, the assumptions for the training sample are different, and the main techniques are distinct. 
We refer interested readers to \citep{bradley05} and references therein for more details about the mixing conditions. 

Our theory is applicable to problems in the field of Statistical Relational Learning (SRL) \citep{Getoor07:other}, 
e.g., for learning local conditional probability functions for directed models such as Probabilistic Relational Models \citep{Friedman99:proc}, Logical Bayesian Networks \citep{Fierens05-ILP:proc}, 
Relational Bayesian Networks \citep{Jaeger97:proc}.  
There is a huge literature in SRL for learning features and existence of edges in a graph, for which we refer the reader to the survey by \cite{Rossi2012:jrnl}.  

\section{Model}\label{sec:problem}
In this section, we introduce networked examples. 
The basic intuition is that a networked example combines a number of objects, 
each of which can carry some information (features). 
Every object may be shared among several examples, such that we get a network. 
The sharing of information also makes explicit the dependencies between the examples. 

\subsection{Networked examples}
\label{sec:networked.examples}
We use a hypergraph $G=(V_G,E_G)$ to represent a set of networked examples.  
The vertices $V_G = \{v_1,v_2,\ldots,v_m\}$ represent objects, 
and the hyperedges $E_G = \{e_1,e_2,\ldots,e_n\}$ represent examples grouping a number of objects.  
In what follows, we will often abuse terminology, identify vertices and objects and use the terms interchangeably. 

We denote with $\objset$ the space of features of objects. 
These features can be represented with feature vectors or other suitable data structures. 
As usual in supervised learning, an example is a pair of an input and a target value.  
The input of an example is a collection of features of objects and is represented by a multiset of elements of $\objset$. 
We denote with $\spacefull$ 
the space of features of all possible examples. 
Each training example has a target value, e.g., a class label or regression value. 
We denote with $\targetvalset$ the space of all possible target values. 

This hypergraph-based representation together with the weaker independence assumption (see the next subsection) 
have powerful expressive ability and cover many existing models. 
For example, many previous works treat the ranking problem as binary classification on networked data. 

\paragraph{Learning to rank}
\textit{Consider the pairwise document ranking problem, which has been extensively studied for information retrieval \citep{liu2009learning}. 
We have a hypergraph $G$ with a vertex set $V_G$ of documents, and 
every hyperedge contains two document vertices (this hypergraph is actually a normal graph since every hyperedge has exactly two vertices). 
Every document is described by a feature vector from $\objset$ which could be a bag of words; every training example contains two documents (and their features from $\spacefull = \objset \times \objset$).  The target value of an example is the order between the two documents, i.e., $\targetvalset = \{0,1\}$ and $0$ means that the first document is ranked higher than the second one, $1$ otherwise. 
In fact, an even larger class of learning to rank problems can be modeled using networked examples (called \emph{interdependent examples} in the literature), and we refer the interested readers to \citep[Chap.\ 4]{amini2015learning}.} 
\eoe

In the following example, we make a simplified version of the problem of applying machine learning techniques to decide which projects to invest. 
In contrast to the example above, every hyperedge in the example of investment may have more than two vertices and the cardinality of every hyperedge may be different from each other. 

\paragraph{Investment}
\textit{Some organizations such as banks or research councils provide financial-related support, e.g., loans or grants, to business projects or research projects. 
These organizations may use machine learning techniques to help deciding which projects to invest. 
In the hypergraph-structured training dataset, every vertex is a person with a feature vector from $\objset$ describing education level, age, \ldots and every training example contains a group of people who take charge of a project. 
The target value of an example is the return on investment. 
It is common that some people may be involved in several different projects, and in this way we have a hypergraph. 
}
\eoe

Our model and results can also be applied to an important subclass of hypergraphs in which $V_G$ is a union of disjoint sets 
$\{V_G^{(i)}\}_{i=1}^k$ and $E_G\subseteq \times_{i=1}^k V_G^{(i)}$, i.e., $k$-partite hypergraphs. 

\paragraph{Movie rating}
\textit{Consider the problem of movie rating. 
We have a tripartite hypergraph $G$ with a vertex set $V_G^{(1)}$ of persons, a vertex set $V_G^{(2)}$ of movies and a vertex set $V_G^{(3)}$ of cinemas. 
Every hyperedge contains a person vertex, a movie vertex and a cinema vertex. 
The feature space $\objset$ also has three parts $\objset^{(1)}$, $\objset^{(2)}$ and $\objset^{(3)}$. 
Every person (a vertex in $V_G^{(1)}$) can be described by a feature vector from $\objset^{(1)}$ with gender, age, nationality, \ldots
Every movie (a vertex in $V^{(2)}$) can be described by a feature vector from $\objset^{(2)}$ with genre, actor popularity, \ldots 
Every cinema vertex (a vertex in $V^{(3)}$) can be described by a feature vector from $\objset^{(3)}$ with location, equipments, \ldots 
Then, $\spacefull=\objset^{(1)}\times \objset^{(2)}\times \objset^{(3)}$ is the space of feature vectors of complete examples, 
consisting of a concatenation of a person feature vector, a movie feature vector and a cinema feature vector. 
The target value of an example is the rating the person gave to the movie in the concerned cinema, e.g., the space $\targetvalset$ can be the set $\{1,2,\ldots,10\}$.
A trained model may help recommending movies and suggest good cinemas to watch the movie. 
} 
\eoe

\subsection{Independence assumption}\label{sec:independence.assumption}
Though networked examples are not independent, 
we still need to assume some weaker form of independence of the examples. 
If we would not make any assumption, the dependence between examples could be 
so strong that they perfectly correlate. 
In such situation, it is not possible to learn. 

We consider a labeled hypergraph $(G,\objset,\phi,\targetvalset,\lambda)$, 
where the labels assigned by $\phi: V_G \mapsto \objset$ and $\lambda:E_G\mapsto \targetvalset$ are drawn randomly from a probability distribution $\probl$.  We make the following assumptions:
\begin{itemize}
\item Features of every object (assigned to vertices) are independent from features of other objects, 
i.e., there is a probability distribution $\probx$ on $\objset$ such that for every $q\in \objset$ and $v\in V_G$, 
$ \probl(\phi(v)=q \mid\phi(u)) = \probl(\phi(v)=q) = \probx(q) $ for any $u\neq v$. It is not required that objects have an identical distribution, though we use the same notation $\probx$ for all objects for the sake of simplicity. 

\item Moreover, every example (assigned to hyperedges) gets a target value drawn independently given the features of the objects (vertices) incident with the hyperedge, i.e., there is a probability distribution $\probyx$ on $\targetvalset\times \spacefull$ such that for all $e\in E_G$, $\probl(\lambda(e)=y \mid \phi|_e) = \probyx(y, \phi|_e) = \probl(\lambda(e)=y \mid \phi, E_G)$. Here, $\phi|_e$ is $\phi$ restricted to $e$, i.e., $\phi|_e=\{(v,\phi(v))\mid v\in e\}$. 
\end{itemize}

From the above assumptions, we can infer that 
\[\probl(\phi,\lambda) = \prod_{v\in V_G} \probx(\phi(v)) \prod_{e\in E_G} \probyx(\lambda(e), \phi|_e).\] 
Our analysis holds no matter what the distribution $\probl$ is, as long as the above assumptions are met. 

It is worth pointing out that this weak independence assumption may not yet hold in all real-world situations but is already a better approximation than the classic i.i.d.\ assumptions. 
For instance, consider again the problem of learning to rank, this weak independence assumption is  usually satisfied, and whether we know the target  value of a specific pair of documents only depends  on whether it is in the collected data, independent  of the features of the two documents. 
Besides, similar assumptions are also made in several previous works such as \citep{usunier06,Liva10—JMLR,Liva15—ICML}. 

\section{Effective sample size, intuition, and examples}
An important aspect of the theory presented in this paper can be understood as a better estimation of the \emph{effective sample size} of networked datasets. 
Generally speaking one can define the effective sample size of a dataset $G$,  for a particular statistical task of estimating $T$, as the number of examples an i.i.d.\ dataset would need to allow for estimating $T$ as accurately as can be done with the networked dataset $G$. 
In this paper, 
we consider \emph{mean value estimation} and use concentration inequalities as references. 

Classical concentration inequalities such as Bernstein inequality \citep{bernstein24}, Bennett inequality \citep{bennett62} and Chernoff-Hoeffding inequalities \citep{chernoff52,hoeffding63} (see e.g., \citep[Chap.\ 2]{chung06}) are used to analyze generalization errors of learning problems on i.i.d.\ samples. 

\cite{janson04} showed concentration inequalities (see Appendix \ref{ssec:eqw})  for the (equally weighted) mean value estimator of \emph{networked random variables} by the so-called fractional-coloring-based method. 

\begin{definition}[Networked random variables] Given a hypergraph $G$, we call $\left\{\xi_i\right\}_{i=1}^{\SizeEdge}$ \emph{$G$-networked random variables} if there exist functions $f_i:\spacefull\mapsto \mathbb{R}$ such that $\xi_i = f_i(\{\Phi_v\mid v\in e_i\})$ where $\{\Phi_v\}_{v\in V}$ is a set of independent random variables indexed by the vertices of $G$ and $\{\Phi_v\mid v\in e_i\} = x_i$ is the feature of example $i$. 
\end{definition}

In Janson's inequalities, a key parameter is the fractional hyperedge-chromatic number\footnote{In \citep{janson04}, this parameter is called the fractional cover number.} $\chi^*_{G}$. 

\begin{definition}[Fractional hyperedge-chromatic number]
A $b$-fold hyperedge-coloring of a hypergraph $G$ is an assignment of $b$ colors to every hyperedge in $E_G$ such that adjacent hyperedges have no color in common.
The $b$-fold hyperedge-chromatic number $\chi^{(b)}_G$ of a hypergraph $G$ is the smallest number of colors needed to obtain a $b$-fold hyperedge-coloring of the hyperedges in $E_G$.
The fractional hyperedge-chromatic number $\chi^*_G$ is 
$$\chi^*_G = \lim_{b\rightarrow\infty}\frac{\chi^{(b)}_G}{b} = \inf_b \frac{\chi^{(b)}_G}{b}.$$
\end{definition}

Though Janson's result and its variations are the foundation of (almost) all existing theoretical work related to learning from networked examples \citep{usunier06,Liva10—JMLR,amini2015learning}, we begin our technical discussion with the following examples which show that it is necessary and possible to improve Janson's result (also see the remark after Theorem \ref{thm:janson04}). 
We denote the effective sample size with Janson's method by $\textsc{CEss}$, i.e., $\textsc{CEss}(G) = n/\chi^*_G$, and denote the effective sample size with our new method by $\textsc{MEss}$. According to the results in Sec.\ \ref{sec:concentration.inequalities}, $\textsc{MEss}(G) = \nu^*_G$ the fractional matching number of $G$ (in Example \ref{example:details} and \ref{ex:oddcycle}, $\textsc{MEss}(G) = \nu^*_G = n/\omega_G$ where $\omega_G$ is the maximum degree of $G$). 

\begin{example}[
$\textsc{MEss} = \Omega(n) \cdot \textsc{CEss}$]\label{example:details}
Consider two hypergraphs each of which has $n$ hyperedges: 
\begin{itemize}
\item $G_n$: all the $n$ hyperedges share a common vertex; and
\item $H_n$: all the $n$ hyperedges intersect with each other, but there do not exist any three hyperedges that share a common vertex. 
\end{itemize}



For any $n$, hypergraphs $G_n$ and $H_n$ have the same fractional hyperedge-chromatic number $\chi_{G_n}^* = \chi_{H_n}^* = n$ because in both cases any two hyperedges intersect with each other. 
According to Janson's result, both of them have effective sample size $n/\chi_{G_n}^* = n/\chi_{H_n}^* = 1$. 

For hypergraph $G_n$, this result is reasonable because all the networked random variables may be highly controlled by the shared vertex. 
However, if two networked random variables in $H_n$ are highly correlated, then intuitively the two variables cannot be very correlated with other networked random variables in $H_n$; hence the effective sample size of $H_n$ would be larger than $1$. 
This is indeed the case, as we show in Corollary \ref{cor:sameweight} the effective sample size of $H_n$ can be $n/2$ which cannot be obtained by fractional-coloring-based approaches 
because these methods only consider which examples are dependent, but not the details - how they are dependent. \textbf{Details matter! }
\end{example}

The example above shows, that if there is a subset of networked random variables  that any two of them overlap, then fractional-coloring-based approaches only deal with the worst case that all the networked random variables share a common vertex but ignore the details of the hypergraph. A natural question to ask is whether this is the ``only way'' that $\textsc{MEss}$ can beat $\textsc{CEss}$. In particular, are there other cases without ``the issue of details'' that $\textsc{MEss}$ can still be larger than $\textsc{CEss}$? 
The following example shows that the answer is yes. 

\begin{example}[$\textsc{MEss} > \textsc{CEss}$ on odd cycles]\label{ex:oddcycle}
Consider an odd cycle $G$ with $n=2k+1$ hyepredges, i.e., hyperedge $e_i$ only intersect with two hyperedges $e_{(i-1)\mod n} $ and $e_{(i+1) \mod n}$. 
The fractional hyperedge-chromatic number of $G_n$ is $2+1/k$ (see e.g.,\citep{scheinerman2011fractional}), so $\textsc{CEss}(G_n) = n/(2+1/k) = k$. 
By Corollary \ref{cor:sameweight}, we improve the result to $\textsc{MEss}(G_n) = n/2 = k + 1/2$. 
Note that in this example, there does not exist the issue of details mentioned in Example \ref{example:details}, but a gap between $\textsc{MEss}$ and $\textsc{CEss}$ still exists. 
\end{example}

In both examples above all the hypergraphs we considered have \emph{symmetric} structures, the equally weighted estimator seems work well. But, the following example shows that this simple estimator is not always good, as it does not satisfy 
\emph{monotonicity}:  
Roughly speaking, given $n$ i.i.d.\ examples, when the sample size $n$ is larger then we would expect a more accurate estimator. Similarly, if $G'$ is a super-hypergraph of $G$, then the effective sample size of $G'$-networked examples should be not smaller than that of $G$-networked examples. 
The intuition is that if we have more data then we should learn better (at least not worse), or else we just select a subsample to learn. 

\begin{example}[Monotonicity]
Consider the following hypergraphs:
\begin{itemize}
\item $G$: has $n_1+n_2$ hyperedges in total, in which $n_1$ hyperedges are independent and the rest $n_2$ hyperedges share a common vertex. 
\item $H$: is a super-hypergraph of $G$, and it has $n_1+n_2+1$ hyperedges in total, in which $n_1$ hyperedges are independent and the rest $n_2+1$ hyperedges share a common vertex. 
\end{itemize}
The fractional hyperedge-chromatic number of $G$ and $H$ are $n_2$ and $n_2+1$ respectively (assume $n_1, n_2\ge 1$). Therefore, $\textsc{CEss}(G) = (n_1+n_2) / n_2$ is greater than $\textsc{CEss}(H) = (n_1+n_2+1) / (n_2+1)$ if we use this simple estimator, though $H$ is a super-hypergraph of $G$. 
\end{example}

In the example above, why don't we just ignore one of the $(n_2+1)$ hyperedges in $H$ and then obtain a better effective sample size that is the same as $\textsc{CEss}(G)$? 
This idea can be generalized: We just select a \emph{maximum matching} from the hypergraph, and then only use the examples corresponding to this matching. 
It is easy to see that
\begin{itemize} 
\item this method satisfies the monotonicity since the matching number $\nu$ of a hypergraph is always not larger than that of its super-hypergraph; 
\item besides, it is well known that $\nu_G \ge n/\chi^*_G$ for all hypergraphs, so it seems that the matching-based method can always achieve larger effective sample size than the fractional-coloring-based method. 
\end{itemize}

However, the matching-based method is still a suboptimal solution, because
\begin{itemize}
\item it is NP-hard to compute the maximum matching of a hypergraph in general,
\item though $\nu_G \ge n/\chi^*_G$ for all hypergraphs, we show out that $n/\chi^*_G$ is not tight for the equally weighted estimator, so matching-based method is not always better than the equally weighted estimator. 
\end{itemize}

In order to tackle this problem and accurately estimate the mean value\footnote{For the sake of simplicity, we assume every $\xi_i$ has the same expected value, and our analysis can be generalized to the case that $\xi_i$'s have non-identical expected values.} $\mu = \expectwithbrackets{}{\xi}$ using networked examples, we consider a class of estimators of the form
\[{\hat{\mu}}_{f,w}=\frac{\sum_{i=1}^{\SizeEdge} w_i \xi_i}{\sum_{i=1}^{\SizeEdge} w_i}\]
and three weighting schemes
\begin{itemize}
\item EQW: all examples get equal weights, i.e., $w_i=1$ for all $i$. As far as we know, almost all related works only deal with estimators of this type. 
\item IND: a maximum-size set $E_\mathit{IND}\subseteq E$ of independent examples is selected, i.e., $\forall e_1,e_2\in E_\mathit{IND}: e_1\cap e_2=\emptyset$. 
Examples in $E_\mathit{IND}$ are weighted $1$, otherwise weighted $0$. 
\item FMN: weight the examples with a fractional matching; it is introduced in Section \ref{ssec:networked.concentration}. 
\end{itemize}

\section{New concentration inequalities}\label{sec:concentration.inequalities}
\newcommand{\mathvec}[1]{\mathbf{{#1}}}

\newcommand{\targetfunc}{f}
In this section, we show new concentration bounds if random variables are allowed to have non-identical weights. 
These concentration inequalities cannot be obtained by Janson's method (the fractional-coloring-based method), so we prove new lemmas. 
In the next section, we use these results to bound generalization errors for learning from networked examples. 

\subsection{Fractional matching schemes}\label{ssec:networked.concentration}

Before stating the main result (Theorem \ref{thm:three.avg}), 
we first define fractional matchings. 

\begin{definition}[Fractional matching]
Given a hypergraph $G=(V_G,E_G)$ with $E_G=\{e_i\}_{i=1}^{\SizeEdge}$, a fractional matching $w$ is a nonnegative vector $(w_i)_{i=1}^n$ defined on the 
hyperedges satisfying that for every vertex $v\in V_G$, 
$\sum_{i:v\in e_i} w_i \le 1.$
In other words, a weight vector is a fractional matching if for every vertex
the sum of the weights of the incident hyperedges is at most $1$.
\end{definition}

\noindent\textbf{Remark:} The IND weighting scheme is also a (fractional) matching method and every weight $w_i$ must be either $0$ or $1$, i.e., these vectors correspond to matchings in hypergraphs. 
The original concentration inequalities can be applied directly to the IND weighting scheme, since all the examples with weight $1$ are mutually independent. Besides, since larger matchings provide tighter bounds, we tend to find a maximum matching whose sum is called the \emph{matching number} and is denoted by $\nu_G$. 
However, it is in general an NP-hard problem to find a maximum matching in hypergraphs (see e.g., \citep{garey79}).
Moreover, the maximum matching problem is also an APX-complete problem \citep{uriel91},
so we would not expect an efficient algorithm to achieve a good approximation either.

A key property used for proving classical concentration inequalities is that all observations are independent. 
That is, if $\left\{\xi_i\right\}_{i=1}^{\SizeEdge}$ are independent random variables, 
then the moment-generating function $\expectwithbrackets{}{\exp\left(c\sum_{i=1}^{\SizeEdge} \xi_i \right)}$, where $c\in \mathbb{R}$, satisfies
\[\expectwithbrackets{}{\exp \left(c\sum_{i=1}^{\SizeEdge} \xi_i\right)} = \prod_{i=1}^{\SizeEdge} \expectwithbrackets{}{e^{c\xi_i}}.\]
However, when considering networked random variables, the equality does not hold anymore. 
Instead, we show a new property in Lemma \ref{lemma:exponential}. 
\begin{lemma}\label{lemma:exponential}
Given $G$-networked random variables $\left\{\xi_i\right\}_{i=1}^{\SizeEdge}$,
if $w = (w_i)_{i=1}^{\SizeEdge}$ is a fractional matching of the hypergraph $G$,
then 
\begin{equation}\label{eq:exponential_ineq}
\expectwithbrackets{}{\exp\left(\sum_{i=1}^{\SizeEdge}{w_i \xi_i}\right)} \le \prod_{i=1}^{\SizeEdge} \left(\expectwithbrackets{}{e^{\xi_i}}\right)^{w_i}. 
\end{equation}
\end{lemma}
 \begin{proof}
 First, note that the expectation in the left hand side of Inequality\ \eqref{eq:exponential_ineq} is over the (independent) features 
 $S_1, \ldots, S_{|V_G|}$ 
 of the vertices of $G$, because these are the basic random variables of which the $\left(\xi_i\right)_{i=1}^{\SizeEdge}$ are composed.
 We prove this theorem by induction on $|V_G|$.
 For $|V_G|=1$,
 \[\expectwithbrackets{}{\exp\left(\sum_{i=1}^{\SizeEdge}{w_i \xi_i}\right)} = \expectwithbrackets{S_1}{\prod_{i=1}^{\SizeEdge} e^{w_i \xi_i}}.\] 
 Using Lemma \ref{lemma:concave} with 
 $t = \left(e^{\xi_i}\right)_{i=1}^{\SizeEdge}$, 
 $\beta = w$ and $g(t) = \prod_{i=1}^{\SizeEdge}  e^{w_i \xi_i},$
 we know that $g(t)$ is a concave function since $w$ is a fractional matching.  
 Given that $g(t)$ is concave, we have 
 $$\expectwithbrackets{}{\exp\left(\sum_{i=1}^{\SizeEdge}{w_i \xi_i}\right)} = \expectwithbrackets{S_1}{g(t)} 
 \le  g(\expectwithbrackets{S_1}{t}) = \prod_{i=1}^{\SizeEdge} \left(\expectwithbrackets{}{e^{\xi_i}}\right)^{w_i}$$
 which follows from Jensen's inequality \citep{jensen}.
 Assume that the theorem is true for $|V_G| = 1,\ldots,m-1$,
 we now prove the theorem for $|V_G|=m$.  
 We can write 
 \begin{equation}\label{eq:mutual_independence}
  \expectwithbrackets{}{\exp\left(\sum_{i=1}^{\SizeEdge}{w_i \xi_i}\right)}
 = \expectwithbrackets{S_m}{\condexpectwithbrackets{}{\prod_{i=1}^{\SizeEdge} e^{w_i \xi_i}}{S_m}}.
 \end{equation}
 where the $\condexpectwithbrackets{}{\cdot}{\cdot}$ notation on the right hand side denotes a conditional expectation.
 We use the induction hypothesis on the right hand side of Eq.\ \eqref{eq:mutual_independence}, yielding 
 \begin{equation}\label{eq:induction_hypothesis}
  \expectwithbrackets{S_m}{\condexpectwithbrackets{}{\prod_{i=1}^{\SizeEdge} e^{w_i \xi_i}}{S_m}} 
 \le \expectwithbrackets{S_m}{\prod_{i=1}^{\SizeEdge} \left(\condexpectwithbrackets{}{e^{\xi_i}}{S_m}\right)^{w_i}}.
 \end{equation}
 We define two index sets $A$ and $B$, partitioning hyperedges in $G$ (and hence random variables $\xi_i$) into a part which is incident with $v_m$ (dependent on $S_m$) 
 and a part which is not, i.e.,  
 $A := \{i | v_m \in e_i\}$ and $B := \{i | v_m \notin e_i \}$. 
 Then, for all $i \in B,$ $\xi_i$ is independent of $S_m.$ 
 We can write this as
 \begin{equation}\label{eq:A_B}
  \expectwithbrackets{S_m}{\prod_{i=1}^{\SizeEdge} \left(\condexpectwithbrackets{}{e^{\xi_i}}{S_m}\right)^{w_i}}
 = \expectwithbrackets{S_m}{\prod_{i\in A} \left(\condexpectwithbrackets{}{e^{\xi_i}}{S_m}\right)^{w_i}}
 \prod_{i\in B} \left(\expectwithbrackets{}{ e^{\xi_i} }\right)^{w_i}.
 \end{equation}
 Let $t = \left(\condexpectwithbrackets{}{e^{\xi_i}}{S_m}\right)_{i\in A}$, 
 $\beta = (w_i)_{i\in A}$ and $g(t) = \prod_{i\in A} \left(\condexpectwithbrackets{}{ e^{\xi_i}}{S_m}\right)^{w_i}.$
 According to the definition of fractional matchings and Lemma \ref{lemma:concave}, we know that $g(t)$ is concave.
 Again, by Jensen's inequality, we have
 \begin{equation}\label{eq:concave}
  \expectwithbrackets{S_m}{\prod_{i\in A} \left(\condexpectwithbrackets{}{ e^{\xi_i}}{S_m}\right)^{w_i}} \le 
 \prod_{i\in A} \left(\expectwithbrackets{S_m}{\condexpectwithbrackets{}{e^{\xi_i}}{S_m}}\right)^{w_i} = 
 \prod_{i\in A} \left(\expectwithbrackets{}{e^{\xi_i}}\right)^{w_i}.
 \end{equation}
 From Equations\ \eqref{eq:mutual_independence}, \eqref{eq:A_B} and Inequalities\ \eqref{eq:induction_hypothesis} and \eqref{eq:concave}, 
 we can see that this theorem holds for $|V_G|=m.$
 \end{proof}
\noindent\textbf{Remark:} Similar results hold for all nonnegative functions of $\xi$, not only $e^{\xi}.$ More precisely, given $G$-networked random variables $\left\{\xi_i\right\}_{i=1}^{\SizeEdge}$, it holds that $\expectwithbrackets{}{ \prod_{i=1}^{\SizeEdge}\left(f(\xi_i)\right)^{w_i}} \le \prod_{i=1}^{\SizeEdge} \left(\expectwithbrackets{}{f(\xi_i)}\right)^{w_i}$ for any nonnegative function $f$ and any fractional matching $w$ of $G$. 

The follow result, which is the networked analogues of the Bennett inequality, shows how to use Lemma \ref{lemma:exponential} to prove new concentration inequalities.  

\begin{lemma}\label{lemma:bennett}
Let $\left(\xi_i\right)_{i=1}^{\SizeEdge}$ be $G$-networked random variables with mean $\expectwithbrackets{}{\xi_i} = {\mu}$ and variance $\sigma^2(\xi_i) = \sigma^2$,
such that $|\xi_i-\mu|\le M$ with probability $1$. 
Let $w = (w_i)_{i=1}^{\SizeEdge}$ be a fractional matching for $G$, and let $|w| = \sum_i w_i$, then for all $\epsilon > 0$, 
\[\Pr\left(\sum_i w_i\left(\xi_i-{\mu}\right) \ge \epsilon \right) \le
\exp \left(-\frac{|w|\sigma^2}{M^2} h\left(\frac{M\epsilon}{|w|\sigma^2}\right)\right)\]
where $h(a) =(1+a)\log(1+a)-a$ for any real number $a$. 
\end{lemma}
\begin{proof}
Without loss of generality, we assume ${\mu} = 0$.
Let $c$ be an arbitrary positive constant which will be determined later.
Then
$$I := \Pr\left(\sum_{i=1}^{\SizeEdge} w_i \xi_i \ge \epsilon \right) = \Pr\left(\exp\left(c\sum_{i=1}^{\SizeEdge} w_i \xi_i\right) \ge e^{c\epsilon} \right).$$
By Markov's inequality and Lemma \ref{lemma:exponential}, we have
$$I \le e^{-c\epsilon}\expectwithbrackets{}{\exp\left(c\sum_{i=1}^{\SizeEdge} w_i \xi_i\right)} \le  e^{-c\epsilon} \prod_i \left(\expectwithbrackets{}{e^{c \xi_i}}\right)^{w_i}.$$
Since $|\xi_i| \le M$ and ${\mu} = 0$, we have
$$\expectwithbrackets{}{e^{c \xi_i}} = 1 + \sum_{p=2}^{+\infty}\frac{c^p\expectwithbrackets{}{\xi_i^{p}}}{p!}\le 1+ \sum_{p=2}^{+\infty}\frac{c^pM^{p-2}\sigma^2}{p!}$$
from the Taylor expansion for exponential functions.
Using $1+a\le e^a$, it follows that
$$\expectwithbrackets{}{e^{c \xi_i}}  \le \exp\left( \sum_{p=2}^{+\infty}\frac{c^pM^{p-2}\sigma^2}{p!} \right) = \exp\left(\frac{e^{cM}-1-cM}{M^2}\sigma^2\right)$$
and therefore
$$I \le \exp\left(-c\epsilon + \frac{e^{cM}-1-cM}{M^2}|w|\sigma^2 \right).$$

Now choose the constant $c$ to be the minimizer of the bound on the right hand side above:
$$ c= \frac{1}{M}\log\left(1+\frac{M\epsilon}{|w|\sigma^2}\right).$$
That is, $e^{cM}-1=\frac{M\epsilon}{|w|\sigma^2}$. With this choice,
$$I\le \exp \left( -\frac{|w|\sigma^2}{M^2} h\left(\frac{M\epsilon}{|w|\sigma^2}\right) \right).$$
This proves the desired inequality.
\end{proof}


We can also derive concentration inequalities which are networked analogues of the Bernstein and Chernoff-Hoeffding inequalities and the proofs are in the appendix. These inequalities are used in the next section to provide learning guarantees. 

\begin{theorem}\label{thm:three.avg}
Let $\left\{\xi_i\right\}_{i=1}^{\SizeEdge}$ be 
$G$-networked random variables with mean $\expectwithbrackets{}{\xi_i} = \mu$,
variance $\sigma^2(\xi_i) = \sigma^2$ (the variance condition is not needed for Inequality \eqref{eq:hoeffding.avg}), and satisfying $|\xi_i-\mu|\le M$. 
Let $w$ be a fractional matching of $G$ and $|w|=\sum_{i=1}^{\SizeEdge} w_i$, then for all $\epsilon > 0$, 
\begin{align}
& \Pr\left(\frac{1}{|w|}\sum_{i=1}^{\SizeEdge} w_i \xi_i - \mu \ge \epsilon \right)
\le \exp \left( -\frac{|w|\epsilon}{2M}\log\left({1+\frac{M\epsilon}{\sigma^2}} \right)\right), \label{eq:bennet.avg} \\
& \Pr\left(\frac{1}{|w|}\sum_{i=1}^{\SizeEdge} w_i \xi_i - \mu \ge \epsilon \right)
\le \exp \left( -\frac{|w|\epsilon^2}{2(\sigma^2+\frac{1}{3}M\epsilon)} \right)\hbox{, and} \label{eq:bernstein.avg}\\
& \Pr\left(\frac{1}{|w|}\sum_{i=1}^{\SizeEdge} w_i \xi_i - \mu \ge \epsilon \right)
\le \exp \left( -\frac{|w|\epsilon^2}{2M^2} \right).\label{eq:hoeffding.avg}
\end{align}
\end{theorem}

\subsection{The FMN Scheme}
According to Theorem \ref{thm:three.avg}, tighter bounds can be obtained by maximizing $|w|$ under the constraint that $w$ is a fractional matching. 
Given a hypergraph, this can be achieved by solving the linear program (LP): 
$$\max_{w} \quad \sum_{i=1}^{\SizeEdge} w_i \qquad
\text{s.t.} \quad \forall i: w_i\ge 0 \quad \text{and} \quad \forall v\in V: \sum_{i:v\in e_i} w_i\le 1.$$
A weight vector which makes the sum $\sum_{i=1}^n w_i$ maximum is called a \emph{maximum fractional matching} of $G$. 
The optimal value of this linear program is called the \emph{fractional matching number (FMN)} of the hypergraph $G$ and is denoted by $\nu^*_G$. 
That is, $\nu^*_G$ is defined as the sum of a maximum fractional matching vector. 

There are very efficient LP solvers, including the simplex method which is efficient in practice, and the more recent interior-point methods \citep{boyd04}. The interior-point method solves an LP in $O(\mathsf{p}^2 \mathsf{q})$ time, where $\mathsf{p}$ is the number of decision variables, and $\mathsf{q}$ is the number of constraints. 
In practice, usually every hyperedge does not connect many vertices and a vertex is not incident to many hyperedges, so these LPs are usually sparse. 
Almost all LP solvers perform significantly better for sparse LPs.

\subsection{Improvement on EQW}\label{sssec:improve}
Using Theorem \ref{thm:three.avg}, we can also improve Janson's inequalities for the EQW weighting scheme.
Let $w$ be a fractional matching and satisfy  $w_1=w_2=\ldots=w_n$ (EQW).
This requires that for all $i$, $0< w_i \le 1/\omega_G$ 
where $\omega_G= \max_{v\in V_G} |\{e: v\in e \}|$ is the maximum degree of $G$.
Let $w_1=w_2=\ldots=w_n=1/\omega_G$, we can get the following corollary. 

\begin{corollary}\label{cor:sameweight}
Let $\left\{\xi_i\right\}_{i=1}^{\SizeEdge}$ be $G$-networked random variables with mean $\expectwithbrackets{}{\xi_i} = \mu$,
variance $\sigma^2(\xi_i) = \sigma^2$, and satisfying $|\xi_i-\mu|\le M$. Then for all $\epsilon > 0$,
\begin{align*}
& \Pr\left(\frac{1}{\SizeEdge}\sum_{i=1}^{\SizeEdge} \xi_i - \mu \ge \epsilon \right)
\le \exp \left( -\frac{n\epsilon^2}{2\omega_GM}\log\left({1+\frac{M\epsilon}{\sigma^2}} \right)\right), \\
& \Pr\left(\frac{1}{\SizeEdge}\sum_{i=1}^{\SizeEdge} \xi_i - \mu \ge \epsilon \right)
\le \exp \left( -\frac{n\epsilon^2}{2\omega_G(\sigma^2+\frac{1}{3}M\epsilon)} \right), \\
& \Pr\left(\frac{1}{\SizeEdge}\sum_{i=1}^{\SizeEdge} \xi_i - \mu \ge \epsilon \right)
\le \exp \left( -\frac{n\epsilon^2}{2\omega_GM^2} \right).
\end{align*}
\end{corollary}

It is known that for every hypergraph, the maximum degree\footnote{Do not confuse the maximum degree $\omega_G$ with another concept - the maximum degree $\Delta$ of the corresponding dependency graphs (see \citep{janson04}). It holds that $\chi^*_G\le \Delta + 1$, but for any $c>0$ there exist (infinitely many) hypergraphs such that $\chi^*_G > \omega_G + c$.} is not larger than the fractional hyperedge-chromatic number, i.e., $\omega_G\le \chi^*_G$. 
This fact generally ensures that the inequalities in Corollary \ref{cor:sameweight} provide tighter bounds than those in Theorem \ref{thm:janson04}. 
In addition, for any number $r\ge 1$, there exist hypergraphs $G$ such that $\chi^*_G/\omega_G > r$, and hence the improvement of Corollary \ref{cor:sameweight} over Theorem \ref{thm:janson04} can be arbitrarily large. 
As an example, we consider (truncated) projective planes\footnote{Projective planes and truncated projective planes are not only of theoretical interest. 
In fact, they are special cases of block designs as studied in the field
of experimental design \citep{colbourn10}. 
This field studies what points in a feature space to measure to maximize certain experimental objectives such as diversity and independence of training data.} 
of order $\kappa$ (see e.g., \citep{Matousek98}). 
It is known that there exists a projective plane of order $\kappa$ whenever $\kappa$ is a prime power. 
The fractional hyperedge-chromatic number of any subhypergraph of a projective plane is equal to its hyperedge number.
For such datasets, Janson's inequalities (Theorem \ref{thm:janson04}) fail to offer useful bounds while
Corollary \ref{cor:sameweight} provides 
significantly better bounds.  
Indeed, the maximum degree of the projective plane of order $\kappa$ ($\kappa\ge 2$) is $\kappa+1$, so $\chi^*_G / \omega_G = \kappa + 1/(\kappa+1)=\Omega(|E_G|^{1/2})$. 

For different hypergraphs, it is possible that $\SizeEdge / \omega_G > \nu_G$, $\SizeEdge / \omega_G = \nu_G$ or $\SizeEdge / \omega_G < \nu_G$. 
Therefore, whether the IND scheme (i.e., the classical concentration bounds applied to independent subsets) provides tighter bounds than the EQW scheme depends on the hypergraph. 
However, both $\SizeEdge / \omega_G$ and $\nu_G$ are smaller than $\nu^*_G$.  
Hence, the FMN scheme always gives better concentration bounds than the other two schemes. 

\subsection{U-statistics}
\cite{ustatistics} presented concentration inequalities for U-statistics which is an important class of statistics on networked random variables. 
Above results also improve these concentration inequalities.
As an example, we consider one-sample U-statistics. 

\begin{definition}[One-sample U-statistics]
Let $\{S_i\}_{i=1}^m$ be independent random variables. 
For $r\le m$ consider a random variable of the form
$$U = \frac{(m-r)!r!}{m!} \sum_{m,r} \xi(S_{i_1}, \ldots, S_{i_r})$$
where the sum $\sum_{m,r}$ is taken over all subset $\{i_1,\ldots,i_r\}$ of distinct positive integers not exceeding $m$.
The random variable $U$ is called a one-sample U-statistic.
\end{definition}
If the function $\xi$ is bounded, $|\xi - \expectwithbrackets{}{\xi}|\le M$,
\cite{ustatistics} showed that for any $\epsilon > 0$, 
$$\Pr\left( U - \mu \ge \epsilon \right) \le \exp\left(-\frac{ \lfloor \frac{m}{r}\rfloor \epsilon^2}{2M^2}\right)$$
where $\mu=\expectwithbrackets{}{U}$.
A corollary of our result shows that the operator $\lfloor \rfloor$ is not necessary
$$\Pr\left( U - \mu \ge \epsilon \right) \le \exp\left(-\frac{ m \epsilon^2}{2rM^2}\right).$$
To prove this inequality, we construct a hypergraph $G=(V_G, E_G)$ for a one-sample U-statistic. 
This hypergraph has $m$ vertices, and $E_G=\{e\subseteq V_G\mid |e|=r\}.$ 
We consider the independent random variables $\{S_i\}_{i=1}^m$ as the features of the vertices. The statistic $U$ is an equally weighted sample mean of the networked random variables defined on the hyperedges. 
The inequality is proved by letting $n = \frac{m!}{(m-r)!r!}$ and $\omega_G = \frac{(m-1)!}{(m-r)!(r-1)!}$ in Corollary \ref{cor:sameweight}. 


\section{Application of new inequalities: Learning theory for networked examples}\label{sec:learningtheory}
In this section, we use the results we obtained in the previous section to show generalization performance guarantees
when learning from networked examples, making the same relaxed assumptions as in previous sections.
We do this in the context of a specific framework (empirical risk minimization), 
but the same principles can be applied to many other paradigms of learning theory, e.g., structural risk minimization \citep{shawe98}. 
We showed that the FMN weighting scheme provides clearly better properties than classical approaches. 

The main goal of supervised learning is to learn a
function $f:\spacefull\mapsto \targetvalset$
from a set of training examples $Z=\{z_i\}_{i=1}^{\SizeEdge}$ with
$z_i=(x_i,y_i) \sim \probl$, and
to predict labels for unseen examples. 

For networked examples in hypergraph $G$, $x_i\in \spacefull$ is the combination of the features of vertices in hyperedge $e_i$ and $y_i\in \targetvalset$ is the target value of hyperedge $e_i$. 
In this case, the dataset $Z$ is called a $G$-networked sample. 

We consider the least square regression which goal is to find a minimizer of  the \emph{expected risk}\footnote{Similar results for general loss functions can be established, in this paper, we take the squared loss for the sake of simplicity.}
$$\mathcal{E}(f)=\int_{\cal Z}\left(f(x)-y\right)^2 \probl(x,y) dxdy.$$
That is, \begin{equation*}
f_{\rho,{\cal F}}=\arg\min_{f\in {\cal F}} \mathcal{E}(f),
\end{equation*}
where the minimization is taken over the set of all measurable functions $\cal F$.
Unfortunately, since the probability distribution $\probl$ is unknown,
$f_{\rho,{\cal F}}$ cannot be computed directly. We find a good approximation of $f_{\rho,{\cal F}}$ from sample. 
The empirical risk minimization (ERM) principle \citep{vapnik91} is to find
the minimizer of empirical risk $\mathcal{E}_{Z}(f)$ in a properly selected hypothesis space $\mathcal{H}$, i.e.,
\begin{equation*}
f_{Z,\cal{H}}=\arg\min_{f\in\mathcal{H}}
\left\{\mathcal{E}_{Z}(f)=\frac{1}{\SizeEdge}\sum_{i=1}^{\SizeEdge}\left(f(x_i)-y_i\right)^2\right\}.
\end{equation*}
The performance of the ERM approach is commonly measured by the \emph{excess risk}
$$\mathcal{E}(f_{Z,\cal{H}})-\mathcal{E}(f_{\rho,{\cal F}})=
[\mathcal{E}(f_{Z,\cal{H}})-\mathcal{E}(f_{\rho,{\cal H}})]+
[\mathcal{E}(f_{\rho,{\cal H}})-\mathcal{E}(f_{\rho,{\cal F}})],$$
where
\begin{equation*}
f_{\rho,{\cal H}}=\arg\min_{f\in\mathcal{H}} \mathcal{E}(f).
\end{equation*}
We call the first part
the \emph{sample error} $\mathcal{E}_{S}(Z) :=\mathcal{E}(f_{Z,\cal{H}})-\mathcal{E}(f_{\rho,{\cal H}}),$
the second part the \emph{approximation error} $\mathcal{E}_{A}({\cal H}):=\mathcal{E}(f_{\rho,{\cal H}})-\mathcal{E}(f_{\rho,{\cal F}})$.

The approximation error is independent of the sample. In this paper, we concentrate on the sample error (we refer the readers who are interested in analyzing approximation errors to \citep{cucker07}). To this end, we need to choose a proper hypothesis space. 
The complexity of the hypothesis space is usually measured in terms of covering number \citep{zhou02}, entropy number \citep{tsuda99}, VC-dimension \citep{vapnik94}, etc.
We use the covering numbers defined below to measure the capacity of our hypothesis space ${\mathcal{H}}$, 
and the hypothesis space $\mathcal{H}$ is usually chosen as a subset of $\mathcal{C}(\spacefull)$
which is a Banach space of continuous functions on a compact metric space $\spacefull$ with the norm
$\|f\|_{\infty}=\sup_{x\in {\spacefull}}|f(x)|.$ 
Our results can be extended to other complexity measures such as VC-dimensions, but it has been shown in \cite{evgeniou1999v} that VC-dimension is not suitable for real-valued function classes, while in this section the random variables are presented as real-valued functions, therefore, we measure the complexity of the hypothesis space by the uniform covering number. 

\begin{definition}[Covering number]\label{def:coveringnumber}
Let $\cal{H}$ be a metric space and $\tau>0.$
We define the \emph{covering number} ${N}(\cal{H},\tau)$ to be the minimal $\ell\in\mathbb{N}$ such that there exists $\ell$
disks in $\cal{H}$ with radius $\tau$ covering $\cal{H}$.
When $\cal{H}$ is compact, this number is finite.
\end{definition}

\begin{definition}[M-bounded functions]\label{def:Mbounded}
Let $M>0$ and $\probl$ be a probability distribution on ${\cal Z}.$
We say that a set $\cal{H}$ of functions from $\spacefull$ to $\targetvalset$ is \emph{M-bounded} when
$$\Pr_{(x,y)\sim\probl}\left(\sup_{f\in\mathcal{H}}|f({x})-y|\le M \right) = 1.$$
\end{definition}

\begin{theorem}[\cite{cucker07}]\label{thm:iid}
Let $\mathcal{H}$ be a compact and convex subset of $\mathcal{C}(\spacefull)$, and $Z$ be an i.i.d.\ sample.
If $\mathcal{H}$ is M-bounded, then for all $\epsilon>0$,
$$\Pr\big(\mathcal{E}_{S}(Z)\geq \epsilon\big)\le
{N}\left(\mathcal{H},\frac{\epsilon}{12M}\right)  \exp\left(-\frac{n\epsilon^2}{300M^4}\right).$$
\end{theorem}
The above theorem provides learning guarantee for i.i.d.\ training samples. 


Now, we consider three weighting schemes having different sample error bounds which are related to different important parameters of hypergraphs. 
The first two weighting schemes are straightforward, but from the upper bound point of view, they waste the information provided by the networked examples. 
The third weighting scheme reaches a better sample error bound via solving the linear program discussed in Section \ref{ssec:networked.concentration}. 

\subsection{The EQW Scheme}\label{sssec:eqw}
Let us first consider the EQW weighting scheme that learns from a set of networked examples in the same way as if they were i.i.d., i.e., without weighting them as a function of the network structure.
We can use Corollary \ref{cor:sameweight} above to bound the sample error of EQW scheme:
\begin{theorem}\label{thm:chromatics}
Let $\mathcal{H}$ be a compact and convex subset of $\mathcal{C}(\spacefull)$, and $Z$ be a $G$-networked sample.
If $\mathcal{H}$ is M-bounded, then for all $\epsilon>0$,
$$\Pr\big(\mathcal{E}_{S}(Z)\ge \epsilon\big)\le  {N}\left(\mathcal{H},\frac{\epsilon}{12M}\right)
\exp\left(-\frac{n\epsilon^2}{300\omega_GM^4}\right).$$
\end{theorem}
The result above shows that the bound of the sample error not only relies on the sample size but also depends on the maximum degree $\omega_G$.
That is, a larger sample may result in a poorer sample error bound since  $\omega_G$
can also become larger.
Remember that almost all previous works only deal with the EQW scheme, and their results depend on $n/\chi^*_G$, so Theorem \ref{thm:chromatics}, which depends on $n/\omega_G$, improves their results significantly. 

\subsection{The IND scheme}\label{ssec:ind}
A straightforward idea to learn from a $G$-networked sample $Z$ is to
find a maximum subset $Z_I \subseteq Z$ of training examples that
correspond to a matching in $G$.
Due to our assumptions, such set is an i.i.d.\ sample.
We can then perform algorithms on $Z_I$ for learning, the function we obtain by the ERM principle is
$$f_{Z_I,{\cal H}} = \arg \min_{f \in {\cal H}} \left\{{\cal E}_{Z_I}(f)=\frac{1}{|Z_I|} \sum_{z_i \in Z_I}\left(f(x_i)-y_i \right)^2\right\}.$$
To bound the sample error of this weighting scheme, we can directly use Theorem \ref{thm:iid}, replacing $n$ there by $|Z_I|$. 
As we discussed in Section \ref{sec:concentration.inequalities}, despite the IND scheme satisfies monotonicity, it is still far from optimal, due to the lack of efficiency. 
Besides, similar to the EQW scheme, the IND scheme may result in that networked datasets are not fully utilized. 


\subsection{The FMN scheme}\label{sec:s}
We now consider the FMN weighting scheme proposed in Section \ref{ssec:networked.concentration}.
For a $G$-networked sample $Z$, we denote the FMN weighted sample $Z_{\nu^*} = \{(z_i,w_i)\}$
where $(w_i)_{i=1}^{\SizeEdge}$ is an FMN weight vector.
Now we can define a new empirical risk on the FMN weighted sample $Z_{\nu^*}$ that
$$\mathcal{E}_{Z_{\nu^*}}(f)=\frac{1}{\nu^*}\sum_{i=1}^{\SizeEdge} w_i\left(f({x}_i)-y_i\right)^2.$$
Later, we show that the empirical risk $\mathcal{E}_{Z_{\nu^*}}$ converges to the expected risk
$\mathcal{E}(f)$ as $\nu^*$ tends to infinity for fixed $f,$  then excess risk can be divided into two parts as follows
$$\mathcal{E}(f_{{Z}_{\nu^*},{\cal H}})-\mathcal{E}(f_{\rho,{\cal F}})=
[\mathcal{E}(f_{{Z}_{\nu^*},{\cal H}})-\mathcal{E}(f_{\rho,\mathcal{H}})]+[\mathcal{E}(f_{\rho,\mathcal{H}})-\mathcal{E}(f_{\rho,{\cal F}})].$$
We use the probability inequalities with $\nu^*$ (see Theorem \ref{thm:three.avg})
to estimate the sample error $\mathcal{E}_{S}({Z}_{\nu^*}):=\mathcal{E}(f_{{Z}_{\nu^*},{\cal H}})-\mathcal{E}(f_{\rho,{\cal H}}).$

\begin{theorem}\label{thm:nu-bound}
Let $\mathcal{H}$ be a compact and convex subset of $\mathcal{C}(\spacefull).$
If $\mathcal{H}$ is M-bounded,
then for all $\epsilon>0$,
$$\Pr\big(\mathcal{E}_{S}({Z}_{\nu^*})\ge \epsilon\big)\le {N}\left(\mathcal{H},\frac{\epsilon}{12M}\right)  \exp\left(-\frac{\nu^*\epsilon^2}{300M^4}\right).$$
\end{theorem}


\section{Concluding remarks}\label{sec:conclusion}
In this paper, we considered the problem of learning from networked data. 
We proposed several schemes for weighting training examples that allow for using the available training data to a large extent while mitigating the dependency problem. 
In particular, the FMN weighting scheme allows for generalizing a large fraction of existing statistical learning theory. 
The weights in our weighting schemes can be computed efficiently.
The presented theory forms a step towards a statistically sound theory for learning in networks. 

We made here a weaker independence assumption. In order to analyze more real-world networked cases, one needs to build new models with more flexible assumptions than the classic i.i.d.\ assumption and the weaker assumption we studied which is already more general but still is not sufficiently powerful to properly model a range of real-world scenarios. 
A first step in this direction would be to develop a measure to assess the strength of the dependency of the features between vertices and its influence on the learning task at hand. 

To design active learning methods for structured data is another useful topic,
i.e., to study query strategies to choose objects or examples in a network to
perform experiments in order to learn a good predictor at minimal cost. 
It is also an interesting work to study the implications of the above
theory and algorithms for other learning settings and tasks, e.g., cross-validation and bootstrapping (see e.g., \citep{liu1988bootstrap}). 


\newpage
\bibliography{jmlr}
\appendix

\newpage

\section{EQW and Janson's Inequalities}\label{ssec:eqw}
\cite{janson04} showed inequalities on the gap between the expected value $\mu$ and the unweighted average of $n$ networked random variables $\xi_i$. 

\begin{theorem}[\cite{janson04}]\label{thm:janson04}
Let $\left\{\xi_i\right\}_{i=1}^{\SizeEdge}$ be $G$-networked random variables with mean $\expectwithbrackets{}{\xi_i} = \mu$, variance $\hbox{var}(\xi_i)=\sigma^2$
and satisfying $|\xi_i-\mu|\le M$.  Then for all $\epsilon > 0$,
\begin{align*}
& \Pr\left(\frac{1}{\SizeEdge}\sum_{i=1}^{\SizeEdge} \xi_i - \mu \ge \epsilon \right)
\le \exp \left(-\frac{n\epsilon^2}{2\chi^{*}_G M^2} \right),\\
& \Pr\left(\frac{1}{\SizeEdge}\sum_{i=1}^{\SizeEdge} \xi_i - \mu \ge \epsilon \right)
\le \exp \left(-\frac{8n\epsilon^2}{25\chi^{*}_G(\sigma^2+M\epsilon/3)} \right). 
\end{align*}
\end{theorem}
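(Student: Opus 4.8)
The plan is to exploit the only structural fact available about $H$-networked variables: two of them are independent as soon as their hyperedges are vertex-disjoint, so whenever $M$ is a matching of $H$ (a set of pairwise disjoint hyperedges) the sub-family $\{X_i : e_i\in M\}$ consists of independent random variables. For such sub-families the classical one-dimensional tools apply verbatim: Hoeffding's lemma for the first inequality and a Bernstein-type moment generating function estimate for the second. A fractional edge colouring is then used to glue these independent blocks together, and the cost of the gluing is exactly the factor $\chi^{*}(H)$.

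First I would recall the fractional-cover description of $\chi^{*}(H)$: there are matchings $M_1,\dots,M_k$ of $H$ and weights $w_1,\dots,w_k>0$ with $\sum_j w_j=\chi^{*}(H)$ and $\sum_{j:\,e_i\in M_j}w_j\ge 1$ for every $i$. Since any subset of a matching is still a matching, an over-covered hyperedge can be deleted from some of the $M_j$ without changing $\sum_j w_j$, so we may assume $\sum_{j:\,e_i\in M_j}w_j=1$ for all $i$. Put $\theta_j=w_j/\chi^{*}(H)$, so $\sum_j\theta_j=1$, and $n_j=|M_j|$. Then $\sum_i X_i=\sum_i\bigl(\sum_{j:\,e_i\in M_j}w_j\bigr)X_i=\sum_j w_j S_j$ with $S_j=\sum_{e_i\in M_j}X_i$ a sum of independent centred-able variables, and the same counting gives $\sum_j\theta_j n_j=n/\chi^{*}(H)$.

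The core step is a moment generating function bound through the generalized Hölder inequality. After centring, $\sum_i X_i-n\mu=\chi^{*}(H)\sum_j\theta_j(S_j-n_j\mu)$, so for $\lambda>0$,
\[
\mathbb{E}\Bigl[e^{\lambda(\sum_i X_i-n\mu)}\Bigr]
=\mathbb{E}\Bigl[\prod_j\bigl(e^{\lambda\chi^{*}(H)(S_j-n_j\mu)}\bigr)^{\theta_j}\Bigr]
\le\prod_j\mathbb{E}\Bigl[e^{\lambda\chi^{*}(H)(S_j-n_j\mu)}\Bigr]^{\theta_j},
\]
by Hölder with exponents $1/\theta_j$. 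Each factor is the MGF of a sum of independent centred variables. For the first bound I would apply Hoeffding's lemma ($|X_i-\mu|\le M$ gives $\mathbb{E}[e^{s(S_j-n_j\mu)}]\le e^{s^2 n_j M^2/2}$); aggregating over $j$ with $\sum_j\theta_j n_j=n/\chi^{*}(H)$ yields $\mathbb{E}[e^{\lambda(\sum_i X_i-n\mu)}]\le\exp(\tfrac12\lambda^2\chi^{*}(H)nM^2)$, and a Chernoff bound with $\lambda=\epsilon/(\chi^{*}(H)M^2)$ gives the first claim. For the second bound I would instead use a Bernstein-type MGF estimate $\mathbb{E}[e^{s(S_j-n_j\mu)}]\le\exp\bigl(\tfrac{n_j\sigma^2 s^2}{2(1-Ms/3)}\bigr)$, valid for $0<s<3/M$; the same aggregation produces $\mathbb{E}[e^{\lambda(\sum_i X_i-n\mu)}]\le\exp\bigl(\tfrac{n\chi^{*}(H)\sigma^2\lambda^2}{2(1-M\chi^{*}(H)\lambda/3)}\bigr)$, after which optimizing $\lambda$ in the Chernoff bound gives a Bernstein-type tail, and a standard (slightly lossy) simplification of the resulting expression lands on the stated form with the constant $8/25$.

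I expect the main obstacle to be bookkeeping rather than conceptual: making the fractional cover exact via the trimming argument, checking that the centring constants survive the Hölder step, and — for the second inequality — carrying out the Chernoff optimization and the subsequent algebraic relaxation carefully enough to reach the explicit constant $8/25$, which is precisely where one trades a little sharpness for a clean closed form. Everything else reduces to one-dimensional independent-sum estimates that are already standard.
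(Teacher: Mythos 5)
Your argument is essentially sound, but note that the paper does not prove Theorem \ref{thm:janson04} at all: it is quoted verbatim from \cite{janson04}, and what you have written is in substance a reconstruction of Janson's original proof --- a fractional hyperedge colouring giving matchings $M_1,\dots,M_k$ with weights summing to $\chi^{*}(H)$, the observation that variables on pairwise disjoint hyperedges are mutually independent under the paper's assumptions, the generalized H\"older inequality with exponents $1/\theta_j$ to factor the moment generating function over the matchings, and standard Hoeffding/Bernstein bounds per block followed by a Chernoff optimization. Two small remarks on the bookkeeping you flag: to make the cover exact you cannot always just delete an over-covered hyperedge from some $M_j$ (the excess need not equal any single $w_j$); the standard fix is to split a matching into two copies with the weight divided between them, which changes nothing else. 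Also, carrying out the Bernstein step as you describe actually yields the exponent $-n\epsilon^2/\bigl(2\chi^{*}(H)(\sigma^2+M\epsilon/3)\bigr)$, i.e.\ constant $1/2$ rather than $8/25$; since $8/25<1/2$ this implies the stated inequality, so no further lossy relaxation is needed. It is worth contrasting your route with the machinery the paper itself develops for its own (sharper) bounds: Theorem \ref{thm:exponential} bounds $\mathbb{E}\exp(\sum_i w_iX_i)$ for an arbitrary feasible edge-weighting by conditioning on the vertex features one vertex at a time and invoking concavity of the weighted geometric mean (Lemma \ref{lem:concave}) via Jensen, which leads to Theorem \ref{thm:three.avg} with $\mathsf{s}(H)$, and to Theorem \ref{thm:sameweight} with $\omega(H)\le\chi^{*}(H)$ in the equal-weight case. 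Your cover-plus-H\"older argument only handles equal weights and produces the weaker $\chi^{*}(H)$ dependence, whereas the paper's per-vertex conditioning handles general feasible weights and strictly improves the constants; on the other hand, your argument needs nothing beyond the fractional cover and classical one-dimensional MGF estimates, which is exactly why it suffices for the quoted theorem.
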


\noindent\textbf{Remark} Note that the second inequality above is a Bernstein-type inequality, but it cannot degenerate to the original Bernstein inequality when the random variables are independent. 
In Section \ref{sssec:improve}, we show an improvement on these inequalities, which not only removes the unnecessary coefficient in the second inequality and then makes the inequality can degenerate to the original independent case when random variables are independent, but also replaces $\chi^{*}_G$ by another parameter which is much smaller than $\chi^{*}_G$ in many cases.

\section{Proofs Omitted from Section \ref{sec:concentration.inequalities}}\label{app:proof_expineq} 
In this part, we prove Theorem \ref{thm:three.avg}. 
\begin{lemma}\label{lemma:concave}
Let $\mathbf{\beta} = (\beta_i)_{i=1}^k \in \mathbb{R}^k_+$ such that $\sum_{i=1}^k \beta_i \le 1$.
Then, the function $g(t)$ with
$t = (t_i)_{i=1}^k \in \mathbb{R}^k_+ $ defined by
$g(t) = \prod_{i=1}^k t_i^{\beta_i}$, is concave.
\end{lemma}
\begin{proof}
We prove by showing that its Hessian matrix $\nabla^2 g(t)$ is negative semidefinite.
$\nabla^2 g(t)$ is given by
$$\frac{\partial^2 g(t)}{\partial t_i^2} = \frac{\beta_i(\beta_i-1)g(t)}{t_i^2},
\qquad \frac{\partial^2 g(t)}{\partial t_i \partial t_j} = \frac{\beta_i \beta_j g(t)}{t_i t_j} ,$$
and can be expressed as
$$\nabla^2 g(t) =\left(qq^\mathrm{T} - \textbf{diag}(\beta_1/t_1^2, \ldots, \beta_n/t_n^2)\right)g(t)$$
where $q=[q_1,\ldots,q_k]$ and $q_i = \beta_i/t_i$.
We must show that $\nabla^2 g(t) \preceq 0$, i.e., that
$$u^\mathrm{ T } \nabla^2 g(t) u =
\left( \left( \sum_{i=1}^k  \beta_i u_i/t_i \right)^2 - \sum_{i=1}^k \beta_i u_i^2/t_i^2 \right) g(t) \le 0$$
for all $u \in \mathbb{R}^k$.
Because $g(t) \ge 0$ for all $t$, we only need to prove
$$ \left( \sum_{i=1}^k  \beta_i u_i/t_i \right)^2 - \sum_{i=1}^k \beta_i u_i^2/t_i^2  \le 0.$$
Since $\beta_i$ is positive for every $i$ and $\sum_{i=1}^k \beta_i \le 1$,
we define a random variable $\xi$ with probability $P(\xi = u_i/t_i) = \beta_i$ and $P(\xi = 0) = 1-\sum_{i=1}^k \beta_i$.
From basic probability theory, we have
$$\left( \sum_{i=1}^k  \beta_i u_i/t_i \right)^2 = \left(\expectwithbrackets{}{\xi}\right)^2 \le \expectwithbrackets{}{\xi^2 } = \sum_{i=1}^k \beta_i u_i^2/t_i^2.$$
\end{proof}

\begin{lemma}\label{lemma:three.sum}
Let $\left(\xi_i\right)_{i=1}^{\SizeEdge}$ be $G$-networked random variables with mean $\expectwithbrackets{}{\xi_i} = {\mu}$ and variance $\sigma^2(\xi_i) = \sigma^2$,
such that $|\xi_i-\mu|\le M$. 
Let $w = (w_i)_{i=1}^{\SizeEdge}$ be a vertex-bounded weight vector for $G$ and let $|w| = \sum_i w_i$, then for all $\epsilon > 0$, 
\begin{align*}
&\Pr\left(\sum_{i=1}^{\SizeEdge} w_i\left(\xi_i - {\mu}\right) \ge \epsilon \right)
\le \exp \left( -\frac{\epsilon}{2M}\log({1+\frac{M\epsilon}{|w|\sigma^2}}) \right), \\
&\Pr\left(\sum_{i=1}^{\SizeEdge} w_i\left(\xi_i - {\mu}\right) \ge \epsilon \right)
\le \exp \left( -\frac{\epsilon^2}{2(|w|\sigma^2+\frac{1}{3}M\epsilon)} \right),\\
&\Pr\left(\sum_{i=1}^{\SizeEdge} w_i\left(\xi_i - {\mu}\right) \ge \epsilon \right)
\le \exp \left( -\frac{\epsilon^2}{2|w|M^2} \right).
\end{align*}
\end{lemma}
\begin{proof}
Without loss of generality, we assume ${\mu} = 0$.
The first inequality follows from Lemma \ref{lemma:bennett} and the inequality
$$h(a) \ge \frac{a}{2}\log(1+a),\;\forall a\ge0.$$
 
The second inequality follows from Lemma \ref{lemma:bennett} and the inequality
$$h(a)\ge\frac{3a^2}{6+2a},\;\forall a\ge0.$$

To prove the third inequality, we use Lemma \ref{lemma:exponential}.
As the exponential function is convex and $-M \le \xi_i\le M$ 
, there holds
$$e^{c\xi_i}\le \frac{c\xi_i-(-cM)}{2cM}e^{cM} + \frac{cM-c\xi_i}{2cM}e^{-cM}.$$
It follows from the assumption $\mu=0$ and the Taylor expansion for the exponential function that
\begin{align*}
\expectwithbrackets{}{e^{c\xi_i}}  \le &  \frac{1}{2} e^{-cM} + \frac{1}{2}  e^{cM} 
= \frac{1}{2}\sum_{p=0}^{+\infty} \frac{(-cM)^p}{p!} + \frac{1}{2}\sum_{p=0}^{+\infty} \frac{(cM)^p}{p!}
= \sum_{p=0}^{+\infty} \frac{(cM)^{2p}}{(2p)!}\\
= & \sum_{p=0}^{+\infty}\frac{((cM)^2/2)^p}{p!}\prod_{j=1}^{p} \frac{1}{2j-1} \le \sum_{p=0}^{+\infty}\frac{((cM)^2/2)^p}{p!} = \exp((cM)^2/2).
\end{align*}
This, together with Lemma \ref{lemma:exponential}, implies 
\begin{eqnarray*} \Pr\left(\sum_{i=1}^{\SizeEdge} w_i\left(\xi_i - {\mu}\right) \ge \epsilon \right)  &=& \Pr\left(\exp\left(c\sum_{i=1}^{\SizeEdge} w_i \xi_i\right) \ge e^{c\epsilon} \right)\\
&\le& \exp\left(-c\epsilon + \expectwithbrackets{}{c\sum_{i=1}^{\SizeEdge} w_i \xi_i}\right)\\
&\le&  \exp\left(-c\epsilon + |w|(cM)^2/2\right).
\end{eqnarray*}
Choose $c=\epsilon/(|w|M^2)$.
Then, $\Pr\left(\sum_{i=1}^{\SizeEdge} w_i\left(\xi_i - {\mu}\right) \ge \epsilon \right)\le \exp\left(-\frac{\epsilon^2}{2|w|M^2}\right)$.
\end{proof}

Now we are ready to prove Theorem \ref{thm:three.avg}.
 
\begin{proof}[proof of Theorem \ref{thm:three.avg}]
We apply 
Lemma \ref{lemma:three.sum} to the variables
$\xi_i'=\xi_i/|w|$ which satisfy $|\xi_i'-\expectwithbrackets{}{\xi_i'}| \le M/{|w|}, \sigma^2(\xi_i')=\sigma^2/{|w|}^2$. 
\end{proof}

\section{Proofs Omitted from Section \ref{sec:learningtheory}}\label{app:proof_sampleerror}
In this part we prove Theorem \ref{thm:nu-bound}.
We first give some lemmas which are extended versions of lemmas that were used before to establish the sample error bounds for i.i.d.\ samples.
In particular, some ideas were borrowed from \citep{cucker07}.
For any function $f\in{\cal H}$, we define the defect function $\mathcal{D}_{{Z}_{\nu^*}}(f)={\cal E}(f)-{\cal E}_{{Z}_{\nu^*}}(f)$, 
the difference between the expected risk of $f$ and the empirical risk of $f$ on the FMN weighted sample $Z_{\nu^*}$.

\begin{lemma}\label{lemma:single}
Let $M > 0$ and let $f : \spacefull \mapsto \targetvalset$ be $M$-bounded. Then for all $\epsilon > 0$,
$$ \Pr\left( \mathcal{D}_{{Z}_{\nu^*}}(f) \ge -\epsilon \right) \ge 1-\exp\left( \frac{\nu^* \epsilon^2}{2M^4} \right).$$
\end{lemma}
\begin{proof}
Note that $\Pr\left( \mathcal{D}_{{Z}_{\nu^*}}(f) \ge -\epsilon \right) = \Pr\left({\cal E}_{{Z}_{\nu^*}}(f) - {\cal E}(f) \le \epsilon \right).$
This lemma then follows directly from Inequality\ \eqref{eq:hoeffding.avg} in Theorem \ref{thm:three.avg} by taking $\xi_i=(f(x_i)-y_i)^2$
satisfying $|\xi_i|\le M^2$ when $f$ is M-bounded.
\end{proof}

To present Lemma \ref{lemma:uniform} and \ref{lemma:familyclass1}, we first define full measure sets.

\begin{definition}[full measure set]
A set $U\subseteq {\cal Z}$ is full measure for distribution $\probl$ over ${\cal Z}$ if $\Pr_{z\sim\probl}\left(z \in U\right)=1$.
\end{definition}

\begin{lemma}\label{lemma:uniform}
If for $j=1,2$, $|f_j(x)-y|\le M$ on a full measure set $U\subseteq {\cal Z}$ then, for all $Z\in U^{\SizeEdge}$
$$|\mathcal{D}_{{Z}_{\nu^*}}(f_1)-\mathcal{D}_{{Z}_{\nu^*}}(f_2)|\le 4M\|f_1-f_2\|_\infty.$$
\end{lemma}
\begin{proof}
Because
$$(f_1(x) - y)^2 - (f_2(x) - y)^2 = (f_1(x) + f_2(x) - 2y)(f_1(x) - f_2(x)),$$
we have
\begin{eqnarray*}
|{\cal E}(f_1)-{\cal E}(f_2)| & = & \left|\int_{\cal Z}\probl(z)(f_1(x) + f_2(x) - 2y)(f_1(x) - f_2(x))\hbox{d}z\right|\\
&\le&  \int_{\cal Z}\probl(z)|(f_1(x) -y) + (f_2(x) - y)|\|f_1-f_2\|_\infty\hbox{d}z \\
&\le& 2M\|f_1-f_2\|_\infty.
\end{eqnarray*}
For $Z\in U^{\SizeEdge}$, we have 
\begin{eqnarray*}
|{\cal E}_{{Z}_{\nu^*}}(f_1)-{\cal E}_{{Z}_{\nu^*}}(f_2)| & = & \frac{1}{\nu^*}\sum_{i=1}^{\SizeEdge} w_i(f_1(x_i) + f_2(x_i) - 2y_i)(f_1(x_i) - f_2(x_i)\\
&\le&  \frac{1}{\nu^*}\sum_{i=1}^{\SizeEdge} w_i|(f_1(x_i)-y_i) + (f_2(x_i) - y_i)|\|(f_1 - f_2\|_\infty \\
&\le& 2M\|f_1-f_2\|_\infty.
\end{eqnarray*}
Thus, $$|\mathcal{D}_{{Z}_{\nu^*}}(f_1)-\mathcal{D}_{{Z}_{\nu^*}}(f_2)| = |{\cal E}(f_1)-{\cal E}_{{Z}_{\nu^*}}(f_1)-{\cal E}(f_2)+{\cal E}_{{Z}_{\nu^*}}(f_2)|
\le 4M\|f_1-f_2\|_\infty.$$ 
\end{proof}

\begin{lemma}\label{lemma:familyclass1}
Let $\mathcal{H}$ be a compact M-bounded subset of $\mathcal{C}(\spacefull).$ Then, for all $\epsilon>0,$
$${\Pr}\left(\sup_{f\in\mathcal{H}}\mathcal{D}_{{Z}_{\nu^*}}(f)\leq \epsilon\right)\ge 1-{N}\left(\mathcal{H},\frac{\epsilon}{8M}\right) \exp\left(-\frac{\nu^*\epsilon^2}{8M^4}\right).$$
\end{lemma}
\begin{proof}
 Let $\{f_j\}_{j=1}^\ell\subset \mathcal{H}$ with $\ell={N}\left(\mathcal{H},\frac{\epsilon}{4M}\right)$ such that $\mathcal{H}$
is covered by disks $D_j$ centered at $f_j$ with radius $\frac{\epsilon}{4M}.$ 
Let $U$ be a full measure set on which $\sup_{f\in\mathcal{H}}|f(x)-y|\le M$. 
Then for all $Z\in U^{\SizeEdge}$ and for all $f\in D_j$, according to Lemma \ref{lemma:uniform}, we have
$$|\mathcal{D}_{{Z}_{\nu^*}}(f)-\mathcal{D}_{{Z}_{\nu^*}}(f_j)|\leq 4M\|f-f_j\|_\infty\leq 4M \frac{\epsilon}{4M}=\epsilon.$$
Consequently,
$$\sup_{f\in D_j} \mathcal{D}_{{Z}_{\nu^*}}(f)\ge 2\epsilon \Rightarrow \mathcal{D}_{{Z}_{\nu^*}}(f_j)\ge \epsilon.$$
Then we conclude that, for $j=1,\cdots,\ell,$
\[\Pr\left(\sup_{f\in {D_j}} \mathcal{D}_{{Z}_{\nu^*}}(f)\ge 2\epsilon\right)\leq \Pr\left( \mathcal{D}_{{Z}_{\nu^*}}(f_j)\ge \epsilon\right)
\le \exp\left(-\frac{\nu^*\epsilon^2}{2M^4}\right).\]
The last inequality follows from Inequality\ \eqref{eq:hoeffding.avg} in Theorem \ref{thm:three.avg} by taking $\xi_i=-(f_j(x_i)-y_i)^2$. 
In addition, one can easily see that
$$\sup_{f\in\mathcal{H}} \mathcal{D}_{{Z}_{\nu^*}}(f)\ge \epsilon \Leftrightarrow \exists j\le \ell : \sup_{f\in D_j} \mathcal{D}_{{Z}_{\nu^*}}(f)\ge \epsilon$$
and from the fact that the probability of a union of events is bounded by the sum of the probabilities of these events it follows that
$$\Pr\left(\sup_{f\in\mathcal{H}}\mathcal{D}_{{Z}_{\nu^*}}(f)\ge \epsilon\right)\leq \sum_{j=1}^\ell {\Pr}\left(\sup_{f\in D_j}\mathcal{D}_{{Z}_{\nu^*}}(f)\ge \epsilon\right) \leq \ell \exp\left(-\frac{\nu^*\epsilon^2}{8M^4}\right).$$
This completes the proof.
\end{proof}

\begin{lemma}\label{lemma:ratiosingle}
Suppose networked random variables $\left(\xi_i\right)_{i=1}^{\SizeEdge}$
satisfy that for all $i$, $\expectwithbrackets{}{\xi_i}=\mu\ge 0,$ and $|\xi_i-\mu|\leq B$ almost everywhere. 
Let $\left(w_i\right)_{i=1}^{\SizeEdge}$ be any FMN weight vector.
If $\expectwithbrackets{}{\xi_i^2}\leq c\mu,$ then for every $\epsilon>0$ and $0<\alpha\leq 1,$ there holds
\[\Pr \left(\frac{\mu-\frac{1}{\nu^*}\sum_{i=1}^{\SizeEdge} w_i \xi_i}{\sqrt{\mu+\epsilon}}>\alpha \sqrt{\epsilon}\right)\leq 
\exp\left(-\frac{\alpha^2 \nu^* \epsilon}{2c+\frac{2}{3}B}\right).\]
\end{lemma}
\begin{proof}
We apply Inequality\ \eqref{eq:bernstein.avg} in Theorem \ref{thm:three.avg} by substituting the $\xi_i$ in Inequality\ \eqref{eq:bernstein.avg} with $\xi_i/\sqrt{\mu+\epsilon}$, 
the $\epsilon$ in Inequality\ \eqref{eq:bernstein.avg} with $\alpha\sqrt{\epsilon}$, the $M$ in Inequality\ \eqref{eq:bernstein.avg} with $B/\sqrt{\mu+\epsilon}$ and the $|w|$ in Inequality\ \eqref{eq:bernstein.avg} with $\nu^*$.
We get 
\[
\Pr\left(\frac{\mu-\frac{1}{\nu^*}\sum_{i=1}^{\SizeEdge} w_i \xi_i}{\sqrt{\mu+\epsilon}}>\alpha \sqrt{\epsilon}\right)\leq 
\exp\left(-\frac{\alpha^2 \nu^* \epsilon}{2(\sigma^2+B\alpha\sqrt{\epsilon}/3\sqrt{\mu+\epsilon})}\right),\]
where $\sigma^2=\expectwithbrackets{}{(\xi_i/\sqrt{\mu+\epsilon})^2}\le c\mu/(\mu+\epsilon)$.  The lemma then follows from observing that
$c\mu/(\mu+\epsilon)\le c$ (as $\mu\ge 0$ and $\epsilon> 0$) and $B\alpha\sqrt{\epsilon}/3\sqrt{\mu+\epsilon}\le B/3$ 
(as $\mu\ge 0$, $\epsilon\ge 0$ and $0<\alpha\le 1$).
\end{proof}

Lemma \ref{lemma:ratiosingle} can also be extended to families of functions as follows.
\begin{lemma}\label{lemma:ratiofamilyclass}
Let $\mathcal{G}$ be a set of functions on $\mathcal{Z}$ and $c>0$ such that, for each $g\in\mathcal{G},$ $\expectwithbrackets{}{g}\ge 0$, 
$\expectwithbrackets{}{g^2}\leq c\expectwithbrackets{}{g}$ and $|g-\expectwithbrackets{}{g}|\leq B$ almost everywhere. 
Let $\left(w_i\right)_{i=1}^{\SizeEdge}$ be any FMN weight vector.
Then for every $\epsilon>0$ and $0<\alpha\le 1,$ we have
$${\Pr} \left(\sup_{g\in \mathcal{G}} \frac{\expectwithbrackets{}{g}-\frac{1}{\nu^*}\sum_{i=1}^{\SizeEdge} w_i g(z_i)}{\sqrt{\expectwithbrackets{}{g}+\epsilon}} \ge 4 \alpha \sqrt{\epsilon}\right)
\leq {N}(\mathcal{G},\alpha \epsilon)\exp\left(-\frac{\alpha^2 \nu^* \epsilon}{2c+\frac{2}{3}B}\right).$$
\end{lemma}
\begin{proof}
Let $\{g_j\}_{j=1}^J \subset {\cal G}$ with $J=N({\cal G}, \alpha\epsilon)$ be such that $\cal G$ is covered by balls in $\cal C(Z)$ 
centered at $g_j$ with radius $\alpha\epsilon$.

Applying Lemma \ref{lemma:ratiosingle} to $\xi_i = g_j(z_i)$ for each $j$, we have
$$\Pr \left(\frac{\expectwithbrackets{}{g_j}-\frac{1}{\nu^*}\sum_{i=1}^{\SizeEdge} w_i g_j(z_i) }{\sqrt{\expectwithbrackets{}{g_j}+\epsilon}}\ge\alpha \sqrt{\epsilon}\right)\leq 
\exp\left(-\frac{\alpha^2 \nu^* \epsilon}{2c+\frac{2}{3}B}\right).$$
For each $g\in {\cal G}$, there is some $j$ such that $||g-g_j||_{\cal C(Z)}\le \alpha\epsilon.$
Then $|\frac{1}{\nu^*}\sum_{i=1}^{\SizeEdge} g(z_i) - \frac{1}{\nu^*}\sum_{i=1}^{\SizeEdge} w_i g_j(z_i)|$ and $|\expectwithbrackets{}{g}-\expectwithbrackets{}{g_j}|$ are both bounded by $\alpha\epsilon$.
Hence, as $\frac{\sqrt{\epsilon}}{\sqrt{\epsilon+\expectwithbrackets{}{g}}}\le 1$,
$$\frac{|\frac{1}{\nu^*}\sum_{i=1}^{\SizeEdge} g(z_i) - \frac{1}{\nu^*}\sum_{i=1}^{\SizeEdge} g_j(z_i)|}{\sqrt{\expectwithbrackets{}{g}+\epsilon}}\le\alpha\sqrt{\epsilon}$$
and
$$\frac{|\expectwithbrackets{}{g}-\expectwithbrackets{}{g_j}|}{\sqrt{\expectwithbrackets{}{g}+\epsilon}}\le\alpha\sqrt{\epsilon}.$$
The latter implies that 
\begin{eqnarray*}
 \expectwithbrackets{}{g_j}+\epsilon &=& \expectwithbrackets{}{g_j} -\expectwithbrackets{}{g}+\expectwithbrackets{}{g}+\epsilon 
\le \alpha\sqrt{\epsilon}\sqrt{\expectwithbrackets{}{g}+\epsilon}+(\expectwithbrackets{}{g}+\epsilon)\\
&\le&\sqrt{\epsilon}\sqrt{\expectwithbrackets{}{g}+\epsilon}+(\expectwithbrackets{}{g}+\epsilon)\le 2(\expectwithbrackets{}{g}+\epsilon).
\end{eqnarray*}
It follows that $\sqrt{\expectwithbrackets{}{g_j}+\epsilon}\le 2\sqrt{\expectwithbrackets{}{g}+\epsilon}.$ 
We have thus seen that $\frac{\expectwithbrackets{}{g}-\frac{1}{\nu^*}\sum_{i=1}^{\SizeEdge} g(z_i)}{\sqrt{\expectwithbrackets{}{g}+\epsilon}}\ge 4\alpha\sqrt{\epsilon}$ implies
$\frac{\expectwithbrackets{}{g_j}-\frac{1}{\nu^*}\sum_{i=1}^{\SizeEdge} w_i g_j(z_i)}{\sqrt{\expectwithbrackets{}{g}+\epsilon}}\ge 2\alpha\sqrt{\epsilon}$ and hence
$\frac{\expectwithbrackets{}{g_j}-\frac{1}{\nu^*}\sum_{i=1}^{\SizeEdge} w_i g_j(z_i)}{\sqrt{\expectwithbrackets{}{g_j}+\epsilon}}\ge \alpha\sqrt{\epsilon}$.
Therefore,
$$\Pr \left(\sup_{g\in \mathcal{G}} \frac{\expectwithbrackets{}{g}-\frac{1}{\nu^*}\sum_{i=1}^{\SizeEdge} w_i g(z_i)}{\sqrt{\expectwithbrackets{}{g}+\epsilon}} \ge 4 
\alpha \sqrt{\epsilon}\right)\le \sum_{j=1}^J \Pr\left(\frac{\expectwithbrackets{}{g_j}-\frac{1}{\nu^*}\sum_{i=1}^{\SizeEdge} w_i g_j(z_i)}
{\sqrt{\expectwithbrackets{}{g_j}+\epsilon}}\ge \alpha\sqrt{\epsilon}\right)$$
which is bouned by $J\cdotp\exp\left(-\frac{\alpha^2\nu^*\epsilon}{2c+\frac{2}{3}B}\right).$
\end{proof}

Let $\mathcal{L}_\rho^2(\spacefull)$ be a Banach space with the norm
$\|f\|_{{\mathcal{L}}_\rho^2({\spacefull})}=\left(\int_\spacefull|f(x)^2|\probxc(x) \hbox{d}x\right)^\frac{1}{2}.$
where $\probxc(x)=\prod_{i=1}^k x^{(i)}$.
We define the error in $\cal H$ of a function $f\in {\cal H}$,
$${\cal E}_{\cal H}(f) = {\cal E}(f)- {\cal E}(f_{\rho,{\cal H}})$$
which is always nonnegative. 

\begin{lemma}\label{lemma:convex}
Let $\mathcal{H}$ be a convex subset of $\mathcal{C}(\spacefull)$ such that $f_{\rho,{\cal H}}$ exists. 
Then $f_{\rho,{\cal H}}$ is unique as an element in $\mathcal{L}_\rho^2(\spacefull)$ and for all 
$f\in \mathcal{H},$ $$\int_{\spacefull}(f_{\rho,{\cal H}}(x)-f(x))^2 \probxc(x) dx \leq \mathcal{E}_\mathcal{H}(f).$$
In particular, if $\probxc(x)$ is not degenerate then $f_{\rho,{\cal H}}$ is unique in $\mathcal{H}.$
\end{lemma}
\begin{proof}
The proof can be found in \citep{cucker07} (Lemma 3.16). 
\end{proof}

\noindent\textbf{Proof of Theorem \ref{thm:nu-bound}}
For every function $f\in {\cal H}$,
we define a function $$g_f(x,y)=(f({x})-y)^2-(f_{\rho,{\cal H}}({x})-y)^2.$$
We define $\cal G$ as the set of all functions $g_f$ with $f\in {\cal H}$. 
For any function $g_f\in\mathcal{G},$ we have 
\begin{equation}\label{eq:identity}
\expectwithbrackets{z\sim\probl}{g_f}=\mathcal{E}_{\mathcal{H}}(f)\ge 0.
\end{equation}
We first show that the two preconditions of Lemma \ref{lemma:ratiofamilyclass} are true (for $B=2M^2$ and $c=4M^2$):
\begin{enumerate}
 \item $|g_f-\expectwithbrackets{z\sim\probl}{g_f}|\leq 2M^2$
 \item $\expectwithbrackets{z\sim\probl}{g_f^2}\leq 4M^2\expectwithbrackets{z\sim\probl}{g_f}$.
\end{enumerate}
First, since $\mathcal{H}$ is $M$-bounded,
we have that $-M^2\leq g_f({z})\leq M^2$ holds almost everywhere.
It follows that $|g_f-\expectwithbrackets{z\sim\probl}{g_f}|\le 2M^2$ holds almost everywhere. This is the first precondition above.
Second, one can easily see that
$$g_f(z)=(f({x})-f_{\rho,{\cal H}}(x))[(f({x})-y)+(f_{\rho,{\cal H}}(x)-y)].$$ 
It follows that $|g_f(z)|\leq 2M|f(x)-f_{\rho,{\cal H}}(x)|$ holds almost everywhere. 
Then, $\expectwithbrackets{z\sim\probl}{g_f^2}\leq 
4M^2\expectwithbrackets{x\sim\probxc}{(f(x)-f_{\rho,{\cal H}}(x))^2} = 4M^2\int_\spacefull\left(f(x)-f_{\rho,{\cal H}}(x)\right)^2 \probxc(x) \hbox{d}x$.
Together with Lemma \ref{lemma:convex} 
this implies that $\expectwithbrackets{z\sim\probl}{g_f^2}\leq 4M^2\mathcal{E}_{\mathcal{H}}(f)=c\expectwithbrackets{z\sim\probl}{g_f}$
with $c=4M^2.$ 
Hence, all the conditions of Lemma \ref{lemma:ratiofamilyclass} hold and we get that for every $\epsilon>0$ and $0<\alpha\leq 1,$
\begin{equation}
\label{eq:lemratiofamilyclass.applied} {\Pr} \left(\sup_{g\in \mathcal{G}} \frac{\expectwithbrackets{}{g}-\frac{1}{\nu^*}\sum_{i=1}^{\SizeEdge} w_i g(z_i)}{\sqrt{\expectwithbrackets{}{g_j}+\epsilon}} \ge 4 \alpha \sqrt{\epsilon}\right)
\leq {N}(\mathcal{G},\alpha \epsilon)\exp\left(-\frac{\alpha^2 \nu^* \epsilon}{2.4M^2+\frac{2}{3}2M^2}\right).
\end{equation}
Remind from Equation \eqref{eq:identity} that $\expectwithbrackets{}{g_f}=\mathcal{E}_{\cal H}(f)$.
We also define
\[{\cal E}_{\mathcal{H},{{Z}_{\nu^*}}}(f)=\frac{1}{\nu^*}\sum_{i=1}^{\SizeEdge} w_i g_f(z_i) =  \frac{1}{\nu^*}\sum_{i=1}^{\SizeEdge} w_i (f({x})-y)^2- \frac{1}{\nu^*}\sum_{i=1}^{\SizeEdge} w_i (f_{\rho,{\cal H}}({x})-y)^2\] 
Furthermore, we take $\alpha=\sqrt{2}/8$.
Substituting all these into Inequality\ \eqref{eq:lemratiofamilyclass.applied} we get 
\[\forall \epsilon>0, \Pr\left(\sup_{f\in\mathcal{H}}\frac{\mathcal{E}_{\cal H}(f)-\mathcal{E}_{\mathcal{H},{{Z}_{\nu^*}}}(f)}{\sqrt{\mathcal{E}_{S}(f)+\epsilon}}\ge 4\frac{\sqrt{2}}{8} \sqrt{\epsilon}\right) \le {N}\left(\mathcal{G},\frac{\sqrt{2}}{8}\epsilon\right)\exp\left(-\frac{\left(\frac{\sqrt{2}}{8}\right)^2 \nu^* \epsilon}{28M^2/3}\right).\]
As this holds for the supremum over $f$, it also holds for $f=f_{{Z}_{\nu^*},{\cal H}}$:
\[\forall \epsilon>0, \Pr\left(\frac{\mathcal{E}_{\cal H}(f_{{Z}_{\nu^*},{\cal H}})-\mathcal{E}_{\mathcal{H},{{Z}_{\nu^*}}}(f_{{Z}_{\nu^*},{\cal H}})}{\sqrt{\mathcal{E}_{\cal H}(f_{{Z}_{\nu^*},{\cal H}})+\epsilon}} \ge \sqrt{\frac{\epsilon}{2}}\right) 
\le {N}\left(\mathcal{G},\frac{\sqrt{2}}{8}\epsilon\right)\exp\left(-\frac{ \nu^* \epsilon}{896M^2/3}\right).\]
The definition of $f_{{Z}_{\nu^*},{\cal H}}$ tells us that ${\cal E}_S(Z_{\nu^*}) = \mathcal{E}_{\cal H}(f_{{Z}_{\nu^*},{\cal H}})$ and $\mathcal{E}_{\mathcal{H},{{Z}_{\nu^*}}}(f_{{Z}_{\nu^*},{\cal H}})\le 0.$
It follows that (we also upper-bound $896/3$ by $300$)
\[\forall \epsilon>0, \Pr\left(\frac{{\cal E}_S(Z_{\nu^*})}{\sqrt{{\cal E}_S(Z_{\nu^*})+\epsilon}}\ge \sqrt{\frac{\epsilon}{2}}\right) \le {N}\left(\mathcal{G},\frac{\sqrt{2}}{8}\epsilon\right)\exp\left(-\frac{ \nu^* \epsilon}{300M^2}\right).\]
It is easy to see that ${\cal E}_S(Z_{\nu^*})\ge \epsilon$ implies $\frac{{\cal E}_S(Z_{\nu^*})}{\sqrt{{\cal E}_S(Z_{\nu^*})+\epsilon}}\ge \sqrt{\frac{\epsilon}{2}}$, so
\[\forall \epsilon>0, \Pr\left({\cal E}_S(Z_{\nu^*})\ge \epsilon\right) \le {N}\left(\mathcal{G},\frac{\sqrt{2}}{8}\epsilon\right)\exp\left(-\frac{ \nu^* \epsilon}{300M^2}\right).\]
Finally, the inequality $\|g_{f_1}-g_{f_2}\|_{\mathcal{C}({\cal Z})}= \|f_1({x})-f_2({x})[(f_1({x})-y)+(f_2({x})-y)]\|_{\mathcal{C}(\mathcal{Z})}\leq 2M\|f_1-f_2\|_{\mathcal{C}(\spacefull)},$
tells us that
$${N}(\mathcal{G},\frac{\sqrt{2}\epsilon}{8})\leq {N}(\mathcal{H},\frac{\sqrt{2}\epsilon}{16M})\le {N}(\mathcal{H},\frac{\epsilon}{12M}).$$
This completes our proof. 
\eof

\end{document}